\DeclareSymbolFont{extraup}{U}{zavm}{m}{n}
\DeclareMathSymbol{\varheart}{\mathalpha}{extraup}{86}
\DeclareMathSymbol{\vardiamond}{\mathalpha}{extraup}{87}
\newcommand{\R}{\mathbb{R}}
\newcommand{\N}{\mathbb {N}}
\algnewcommand\algorithmicforeach{\textbf{for each}}
\newcommand{\track}{\text{{\fontfamily{qcr}{\selectfont track}}}}
\newcommand{\anchor}{\text{{\fontfamily{qcr}{\selectfont anchor}}}}
\newcommand\Bigger[2][7]{\left#2\rule{0mm}{#1truemm}\right.}
\newtheorem{lemma}{Lemma}
\begin{document}

\title{\LARGE \bf Epistemic Prediction and Planning with Implicit Coordination for Multi-Robot Teams in Communication Restricted Environments}
\author{Lauren Bramblett, Shijie Gao, and Nicola Bezzo%
\thanks{Lauren Bramblett, Shijie Gao, and Nicola Bezzo are with the Departments of Engineering Systems and Environment and Electrical and Computer Engineering, University of Virginia, Charlottesville, VA 22904, USA. Email: {\tt \{qbr5kx sg9dn nb6be\}@virginia.edu}}}

\maketitle

\setreviewson
\begin{abstract} 
In communication restricted environments, a multi-robot system can be deployed to either: i) maintain constant communication but potentially sacrifice operational efficiency due to proximity constraints or ii) allow disconnections to increase environmental coverage efficiency, challenges on how, when, and where to reconnect (rendezvous problem). In this work we tackle the latter problem and notice that most state-of-the-art methods assume that robots will be able to execute a predetermined plan; however system failures and changes in environmental conditions can cause the robots to deviate from the plan with cascading effects across the multi-robot system. This paper proposes a coordinated epistemic prediction and planning framework to achieve consensus without communicating for exploration and coverage, task discovery and completion, and rendezvous applications. Dynamic epistemic logic is the principal component implemented to allow robots to propagate belief states and empathize with other agents. Propagation of belief states and subsequent coverage of the environment is achieved via a frontier-based method within an artificial physics-based framework. The proposed framework is validated with both simulations and experiments with unmanned ground vehicles in various cluttered environments. 

\vspace{2pt}
 
\end{abstract}

\section{Introduction}
Multi-robot systems (MRS) have the potential to assist in many safety-critical applications such as search and rescue, military intelligence and surveillance, and inspection operations where it may be hazardous and costly to deploy humans. Looking to the state-of-the-art, we note that most MRS research assumes constant communication between robots \cite{capelli2020connectivity,hussein2014multi,morilla2022sweep}. However, within the aforementioned application space, long-range communication is often unreliable or unavailable. Humans adequately cope with such problems, performing these tasks collaboratively by extrapolating and empathizing with what other actors might believe if the local plan must change at run-time. This subconscious process can be modally represented as epistemic planning, computing and reasoning about multiple predictions and actions while accounting for a priori beliefs, current observations,
and other actors' sensing and mobility capabilities.
  
In this work, we insist that if the robots in a team could perform similar reasoning without communication then we could relax the typical connectivity constraints, while increasing autonomy (i.e., decrease human intervention) and mission performance (i.e., more coverage, faster task discovery and completion). For this reason, we propose a novel epistemic planning framework for multi-robot systems, allowing each robot to cooperate without constant communication by reasoning about teammates' performance and states, through the propagation of beliefs states about each others. 
\begin{figure}
    \includegraphics[width = 0.48\textwidth]{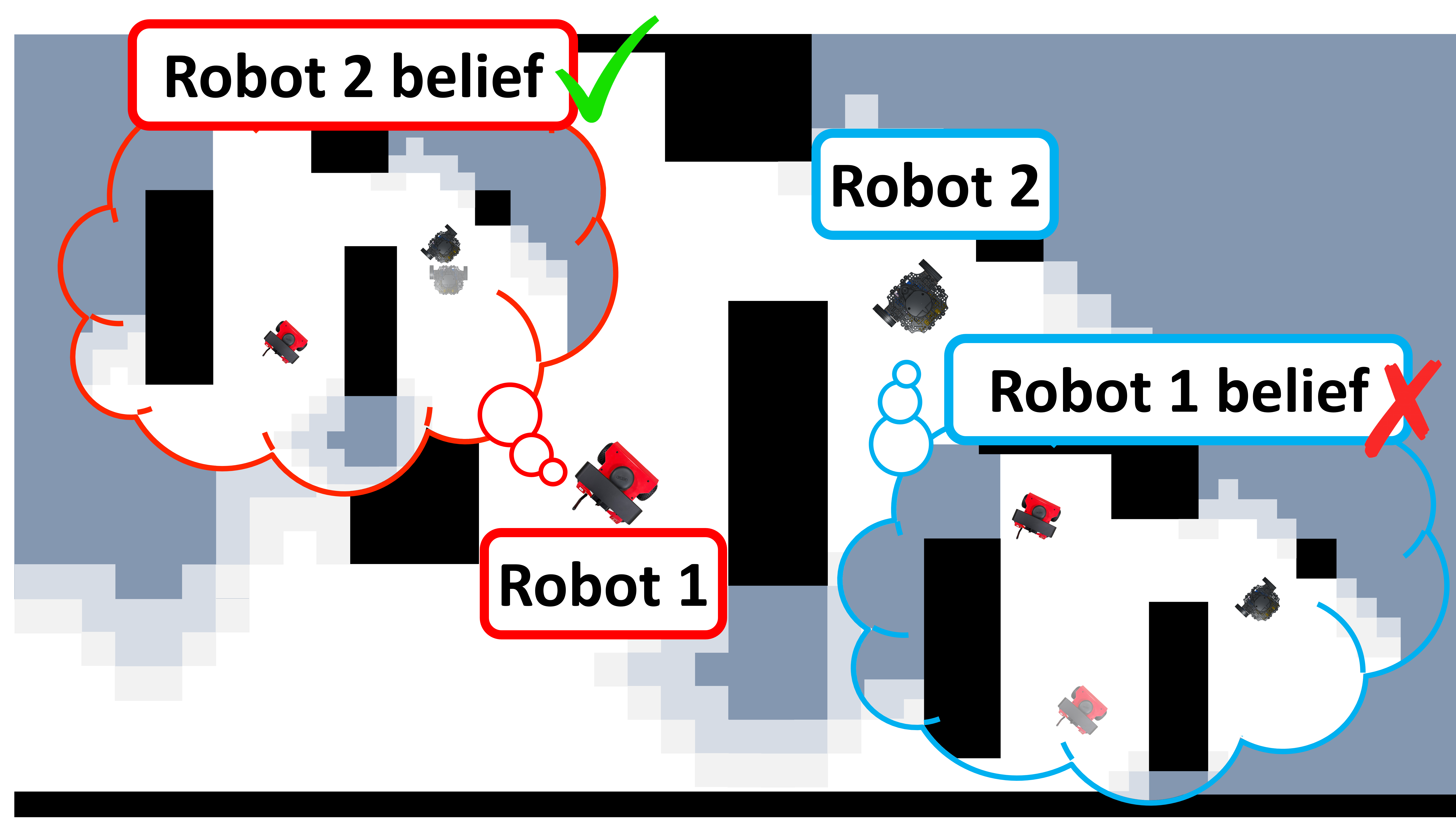}
    \vspace{-5pt}
    \caption{Pictorial depiction of the problem. The proposed framework enables a robot to reason from other agents' perspectives as it experiences a behavior change or observes that another robot is not where expected.}  
    \label{fig:introPic}
    \vspace{-18pt}
\end{figure}
As a reader can note, calculating a distributed plan for coverage while accounting for any combination of robot system failures, changes in the environments, or deviations is intractable. Instead, constructing a finite set of possibilities and implementing a reasoning framework for each robot can reduce computational complexity and allow for more efficient operations. Thus, we introduce a coordinated epistemic prediction and planning method in which a robot propagates a finite set of \textit{belief} states representing possible states of other agents in the system and \textit{empathy} states representing a finite set of possible states from other agents' perspectives. Subsequently, using epistemic planning, we can formulate a consensus strategy such that every distributed belief in the system achieves consensus. For example, consider Fig.~\ref{fig:introPic} where two robots are canvassing an environment.  During disconnection, Robot 1 maintains a set of possible (belief) states for Robot 2 and also a set of (empathy) states that Robot 2 might believe about Robot 1. Once Robot 2 experiences a failure, it tracks another state in its empathy set. We reason that though Robot 1 holds a false belief about Robot 2's state, there exists an epistemic strategy that can allow robot 1 to find robot 2 (i.e., updating its belief after observing robot 1's believed state).

The contribution of our approach is two-fold: i) an epistemic planning formulation using dynamic epistemic logic, formalizing beliefs and knowledge for robot control and ii) a generalized task assignment and artificial potential field-based model for belief propagation and coverage of an environment with considerations for connectivity constraints and team member dynamics.

\section{Related Work}
\label{sec:relatedwork}
Heterogeneous multi-robot exploration, foraging, and coverage have been widely studied in robotics literature \cite{Lin2018,zhou2019bayesian,reily2021adaptation}. Recent works consider communication restricted or intermittent connectivity by modeling ways to maintain connectivity while exploring \cite{capelli2020connectivity}, to account for momentary disconnection \cite{best2018planning}, or dropping intermediary notifications to other robots \cite{cardona2019ant}. Authors in \cite{matignon2012coordinated} use a decentralized Markov decision process to estimate future locations of robots when inter-robot messages are delayed at a stochastic rate. Several works have also included system failures or disturbances in multi-robot exploration policies, such as in the work \cite{al2018generation}; however, the policy assumes robots are able to communicate these disruptions. In most approaches, the execution policy is static and communication is a constraint that either needs to be satisfied for all time or at a defined location \cite{cesare2015multi,el2020hdec}.

A separate, but related, field to multi-robot coverage is multi-robot task allocation (MRTA). MRTA is solved by uniquely assigning a subset of robots to optimize the completion of an objective \cite{kim2020multiplicatively}. Authors in \cite{chen2022consensus} include connection limitations and allocate tasks to connected robots using a bundling algorithm. Another example in \cite{khodayi2019distributed} allocated targets to individual teams and plans rendezvous with team members to reduce uncertainty of targets over time.  

Though recent works in multi-robot task allocation and coverage have included realistic constraints, there is little consideration for the combination of prolonged disconnection and system failures. In our previous work \cite{Bramblett2022}, we define rendezvous points at known locations to coordinate roles for any events during exploration; however, back-tracking to this location reduced efficiency of exploration. In contrast, this work applies dynamic epistemic logic (DEL) \cite{van2007dynamic} to allow a robot to reason about beliefs among actors in a multi-robot system while disconnected and converge to a dynamic rendezvous location. While classically used to describe how knowledge and information changes for players in a game, DEL is not a new concept in robotics. Using robot and human actors, the framework in \cite{bolander2021based} recreates the Sally-Anne psychological test where a robot must reason about the human's beliefs. Typical multi-robot applications use DEL to solve for a cooperative set of actions in multi-player games \cite{maubert2021concurrent}. We extend DEL for a realistic multi-robot application allowing each robot to reason about the system's state considering system failures, task discovery, and partially known environments. 

\section{Preliminaries}
\label{sec:preliminaries}
\subsection{Notation \& Communication}
Let us consider a  multi-robot system of $N_a$ robots in the set $\mathcal{A}$, noting that our approach is suitable for heterogeneous robots of differing capabilities (e.g., dynamics, sensing, etc.). The system's connectivity graph is denoted as $G=(\mathcal{A},\mathcal{E})$ where the set $\mathcal{E} \subset \mathcal{A} \times \mathcal{A}$ represents edge connections between robots and an edge $(i,j) \in \mathcal{E}$ indicates that robots $i$ and $j$ are connected.
Additionally, $N_t$ tasks in the set $\mathcal{T}$ are located in unknown positions within the operating environment. We assume the tasks are stationary and completed once any robot $i \in \mathcal{A}$ navigates within a radius $r_t > 0$, considering that the number of tasks $N_t$ is initially unknown to the robots. 
The robots are assigned to search for the tasks in an environment that is partitioned into $N_m$ cells, which we define as an occupancy map $\mathcal{M}\subseteq \R^2$. When robots navigate to observe unexplored cells $\mathcal{M}_u \subseteq \mathcal{M}$, $\mathcal{M}$ is updated using Recursive Bayesian estimation \cite{asgharivaskasi2021active}, though any method can be used. Subsequently, we define the frontier set $\mathcal{F}$ as the set of explored cells adjacent to unknown cells.

\subsection{Epistemic Logic}
\label{sec:epiLog}
In this work, epistemic and doxastic logic \cite{fagin2004reasoning} is used to model distributed knowledge and reasoning for non-catastrophic system changes during disconnectivity. To propagate uncertain states of robots, we define $N_b$ beliefs in the set $\mathcal{B}$ as the behaviors likely to occur to agents during operation. The set $\mathcal{P} = \{\mathcal{P}_1 , \dots , \mathcal{P}_{N_a} \}$ holds the distributed beliefs of all agents, where an element in $\mathcal{P}_i$ represents possible states from an agent $i$'s perspective of robots $j \in \mathcal{A}$. $\Psi$ is a set of functions that describe the current state of the system. For this application, the epistemic language, $\mathcal{L}(\Psi,\mathcal{P},\mathcal{A}$), is obtained as follows in Backus-Naur form \cite{knuth1964backus}:
\begin{equation*}
    \phi \Coloneqq H(\omega) \ | \ i\sphericalangle j\ | \ \phi\land\phi \ | \ \neg\phi \ | \ K_i\phi \ | \ B_i\phi  
\end{equation*}
where  $i,j\in\mathcal{A}$, $H\in\Psi$ is a function to describe a system state, and $\omega$ broadly indicates function arguments. $\neg\phi$ and $\phi\land\phi$ denote that propositions can be negated and form logical conjunctions. $B_i\phi$ and $K_i\phi$ are interpreted as ``agent $i$ believes $\phi$" and ``agent $i$ knows $\phi$." $i\sphericalangle j$ is an observability atom that reads ``agent $i$ is within communication range of agent $j$". 

A pointed Kripke model ($M$) represents an epistemic model \cite{browne1988characterizing} consisting of the set of possible states of the system referred to as worlds, accessibility relations ($R$) between worlds, $w$, that are possible for an agent, and a valuation ($V$) that labels true propositions in each world.
Dynamic epistemic logic formalizes Kripke model transforms given an event, $a$. In this work, we use dynamic epistemic logic and Kripke models to modally represent a distributed strategy using logical belief-based reasoning. 

\section{Problem Formulation} \label{sec:probform}
In this paper, we consider a scenario in which a multi-robot system must coordinate in a decentralized fashion to efficiently search for tasks at unknown locations in a communication restricted, partially-known environment. There are several challenges that arise to allow efficient and cooperative behavior given limited communication including: 1) how to efficiently explore, search, and cover a partially known environment while remaining disconnected for extended periods of time and 2) how to properly plan to take into account uncertainties that could happen during disconnection that lead to different behaviors of the robots in the systems.
Formally, we define this problem as:

\textbf{Problem 1 (\textit{Communication restricted coverage}):} Find a distributed policy to enable a multi-robot system to quickly perform cooperative search of an environment for tasks with intermittent communication accounting for uncertainties. The policy should consider minimizing mission time while enabling periodic information sharing to cooperatively allocate portions of the environment if a robot's behavior changes.

\section{Approach}
\label{sec:approach}
In this section, we present the approach for the coordinated epistemic prediction and planning framework which propagates belief and empathy states to inform frontier assignment and robot control, all while considering failures, task discovery, and unknown obstacles. For ease of discussion let us consider two robots $i$ and $j$. From robot $i$'s perspective, a {\em{belief state}}, $p_{ij,b}\in\mathcal{P}_i$, represents a possible state of a robot $j$ and an {\em{empathy state}}, $p_{ii,b}\in\mathcal{P}_i$, describes robot $i$'s belief of robot $j$'s belief about robot $i$'s state. With this knowledge, robot $i$ predicts and tracks empathy states to ensure that a robot $j$ holds one true belief of the state of robot $i$. The diagram in Fig.~\ref{fig:mainFrame} summarizes this architecture.

\begin{figure}[t]
\vspace{-1pt}
    \includegraphics[width = 0.48\textwidth]{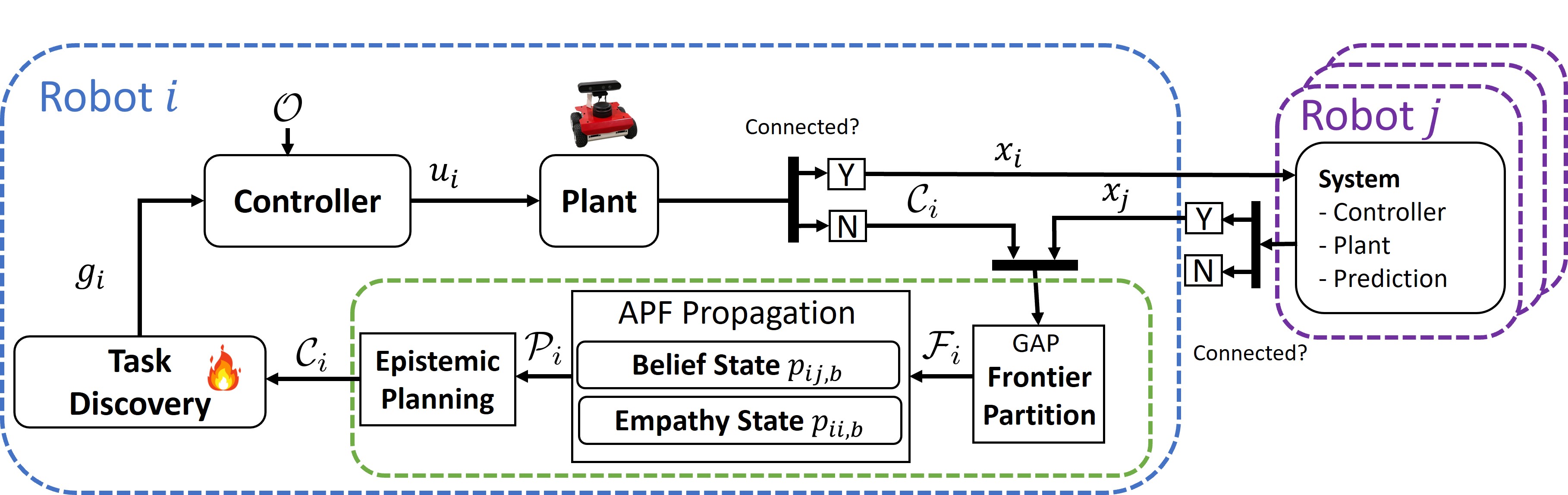}
    \caption{Diagram of the proposed approach. The contributions of this paper are within the green box.}
    \label{fig:mainFrame}
    \vspace{-15pt}
\end{figure}

As shown in Fig.~\ref{fig:mainFrame}, the robot $i$ initially assesses whether communication is successful with a robot $j$. If communication is successful, robot $i$ uses its current state $x_i$ and the state of robot $j$, $x_j$, to partition its frontiers using a generalized assignment problem (GAP) \cite{oncan2007survey} and to predict future states of robot $j$ using an APF method. When connected, epistemic planning is reduced to direct communication of states. If the robots disconnect, a common belief set, $\mathcal{C}_i$, acts as the state for any robot $j \in \mathcal{A}$ from $i$'s perspective.
Predictions for these belief and empathy states are accomplished using the same GAP and APF methods. A robot $i$ then uses these predicted states to plan considering its belief about robot $j$. 

In both connected and disconnected conditions, the robot's objective is to search for tasks. If connected and a task is discovered, the robots bid on and accomplish tasks. If disconnected, the robots will deviate to accomplish the task and subsequently continue to track its empathy state.

In the following sections, we lay out the key components of the planner including: i) belief propagation, ii) MRS coverage assignments, iii) epistemic planning for belief consensus, and iv) obstacle avoidance and task discovery. 

\subsection{Belief \& Empathy Propagation} 
In our coordinated epistemic prediction and planning framework, the robots propagate belief and empathy states for all robots in the multi-robot system. This allows a robot $i$ to plan according to its belief of other robots and reason about what other robots' expect robot $i$ to accomplish while disconnected. As previously noted, to account for uncertainties over long periods of disconnection, it is important to have a finite number of these states. With this goal in mind, we define a finite set of particles, $\mathcal{P}_i$, to represent these belief and empathy states for the $i^{th}$ robot: 
\begin{equation}
    \mathcal{P}_i=\{p_{ij,b} \ \forall j\in\mathcal{A},\forall b\in\mathcal{B}\}.
\end{equation}
The $i^{th}$ robot defines its empathy particles as $\mathcal{P}^e_{i} = \{p_{ii,b} \ \forall b\in \mathcal{B}\}$ and its belief particles about other robots as $\mathcal{P}^r_{i} = \{p_{ij,b}  \ \forall j\in \mathcal{A}\setminus\{i\},\forall b\in \mathcal{B}\}$
where $\mathcal{P}_i = \mathcal{P}_i^e \cup \mathcal{P}_i^r$.
For each robot $j \in \mathcal{A}$, the robot $i$ orders its belief and empathy particles $1$ through $N_b$ by likelihood of occurrence (i.e., from largest to smallest). The order is initialized prior to deployment and each robot $i$ initially tracks its first empathy particles.

While not in communication range of other robots, each robot $i$ has a common belief about each robot $j$ and itself. We define robot $i$'s common belief as $\mathcal{C}_i \subseteq \mathcal{P}_i$ and refer to it as the \textit{common belief set}. All robots track their first empathy particle upon disconnection, $\mathcal{C}_i = \{p_{ij,1} \ \forall j\in\mathcal{A}\}$.

If a robot experiences a failure, choosing which of its next empathy particles to track is nontrivial, but we assume each robot $i$ is capable of computing the set of empathy states that are suitable to track, denoted by $\mathcal{P}^t_{i}\subseteq\mathcal{P}^e_{i}$. The robot chooses to track the particle in $\mathcal{P}^t_{i}$ with the highest likelihood. If all robots are within communication range, the first particle becomes the robot's current state and subsequent particles are propagated based on the updated common belief.

Since the robot will be tracking an empathy particle, these states must propagate in a manner that allows the robot to safely and efficiently accomplish the coverage objective with considerations for intentional information sharing. Thus, we propagate the particles using an artificial potential field (APF) that leverages four main objectives: 1) attraction to frontier, 2) cooperative rendezvous, 3) obstacle avoidance, and 4) task completion. In this propagation method, the total force acting on particle $p_{ij,b}$ is formulated generally as: 
\begin{equation}
    \vspace{-2pt}
    F_{ij,b}^{total} = \beta_1 F_{ij,b}^1 + \beta_2 F_{ij,b}^2 + \beta_3 F_{ij,b}^3 + \beta_4 F_{ij,b}^4 
    \label{eq:taskTotalForce}
        \vspace{-2pt}
\end{equation}
considering $\beta_n$ is a weighting coefficient for force $F^{n}_{ij,b}$ where each $F^n_{ij,b}$ corresponds to the $n^{th}$ objective listed previously and will be discussed in detail. Local minima is avoided using an A$^*$ path planner \cite{julia2008local}. 
\vspace{-0pt}
\subsection{Frontier Attraction}
A frontier-based exploration method is proposed here due to its completeness and simplicity. To begin, the force $F^1_{ij,b}$ in \eqref{eq:taskTotalForce} is an attraction to a frontier set $\mathcal{F} \subset \mathcal{M}$. 
However, a robot should only traverse unique portions of the environment to reduce redundancy and minimize completion time. So, a decentralized GAP assigns particles to a unique subset of $\mathcal{F}$ using its belief of each robots' capabilities.


For particle $p_{ij,b}$, we allocate frontiers based on the cost of assigning the particle or any other robot. Cost and binary assignment are denoted as $\Lambda$ and $\Gamma$, respectively. The corresponding GAP is formulated as:
\begin{align}
    \vspace{-5pt}
\mathcal{F}_{ij,b} =  \min &\sum_{k\in\mathcal{A}}\sum_{z\in \mathcal{F}} \lambda_{zk}\gamma_{zk}\nonumber\\
\text{s.t.} &\sum_{k\in\mathcal{A}} \gamma_{zk}=1, \ \forall z\in \mathcal{F}\nonumber\\
&\sum_{z\in \mathcal{F}}\gamma_{zk}\leq u, \ \forall k \in \mathcal{A} \label{eq:gapProb}\\
&\sum_{z\in \mathcal{F}}\gamma_{zk}\geq \ell, \ \forall k \in \mathcal{A} \nonumber\\
&\gamma_{zk}\in \{0,1\}, \ \forall k\in \mathcal{A}, \ \forall z \in \mathcal{F} \nonumber
    \vspace{-5pt}
\end{align}
where the elements $\lambda_{zk} \in \R_{\geq 0}$ and $\gamma_{zk}$ represent the $z^{th}$ frontier and $k^{th}$ particle in the matrices $\Lambda$ and $\Gamma$, respectively. Cost generally refers to any traversal metric (i.e., energy, time, etc.) to a frontier point $z \in\mathcal{F}$. The variables $u \in \N$ and $\ell \in \N$ are the upper and lower bounds on the number of frontier points that can be assigned to any particle. Each robot $i$ calculates the frontier assignment for each particle in the set $\mathcal{P}_i$ and the assigned frontier set is denoted as $\mathcal{F}_{{ij,b}}$. 

Subsequently, to utilize the GAP solution, the force $F^1_{ij,b}$ controls the $b^{th}$ particle $p_{ij,b}$ towards its assigned frontiers
\begin{equation}
    F^1_{ij,b} = \dfrac{1}{|\mathcal{F}_{ij,b}|}\sum_{z\in \mathcal{F}_{{ij,b}}}\dfrac{s_z-p_{ij,b}}{||s_z-p_{ij,b}||^3} \label{eq:gotofront}
\end{equation}
where $|\cdot|$ indicates the set's cardinality and the coordinate of a $z^{th}$ frontier is designated as $s_z$. The force computed in \eqref{eq:gotofront} encourages particle motion to the unexplored regions of the environment $\mathcal{M}_u$ based on their frontier assignment in $\eqref{eq:gapProb}$. 
\subsection{Epistemic Planning}

Intentional information sharing allows an agent to communicate any environmental or capability changes with other robots. For this purpose, we introduce $F^2_{ij,b}$ in \eqref{eq:taskTotalForce} to control each particle $p_{ij,b}$ according to robot $i$'s common belief set:
\begin{align}
F^2_{ij,b} &= \varphi_i\sum_{k\in\mathcal{A}} \dfrac{c_k-p_{ij,b}}{||c_k-p_{ij,b}||^3}\label{eq:cooperativeBehavior} \\
\varphi_i &= \Bigger[5]\{\begin{array}{@{}rl}
    -h(t_r,\tau), & t_r<\tau\\
    h(t_r,\tau), & t_r\geq \tau
\end{array}
\end{align}
where $c_k$ denotes the $k^{th}$ element in $\mathcal{C}_i$. Variable $\tau$ is a time-based threshold for rendezvous and $h:(t_r,\tau)\mapsto\R_{\geq 0}$ where $t_r$ is the time lapsed since the last successful communication. 

Fig.~\ref{fig:exampleAPF} shows the effect of $\varphi_{i}$. Given the partitioned frontier from \eqref{eq:gotofront}, the robots' particles are incentivized to travel i) away from $c_k$ when $\varphi_{i}<0$, ii) towards its assigned frontier when $\varphi_{i} = 0$, or iii) towards $c_k$ when $\varphi_{i}>0$. We denote the line between common belief particles as the \textit{anchor line}.
\begin{figure}[b]
\vspace{-9pt}
    \centering
    \subfigure[$\varphi_i<0$]{\includegraphics[width=0.18\textwidth]{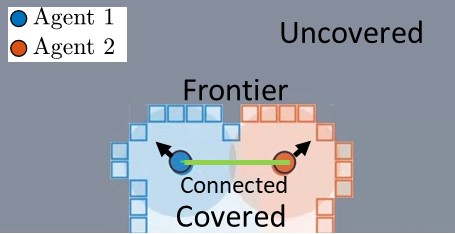}}
    \subfigure[$\varphi_i=0$]{\includegraphics[width = 0.18\textwidth]{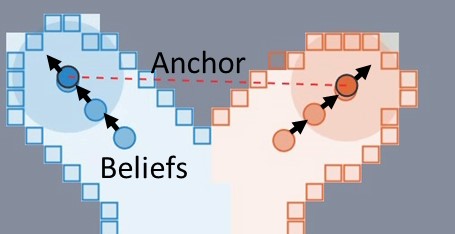}}
    \subfigure[$\varphi_i>0$]{\includegraphics[width = 0.18\textwidth]{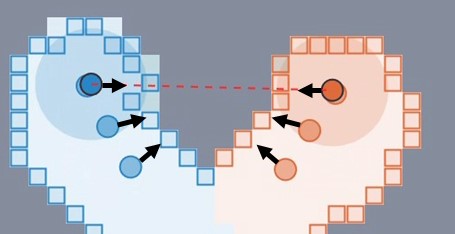}}
    \subfigure[Reconnected and reset]{\includegraphics[width = 0.18\textwidth]{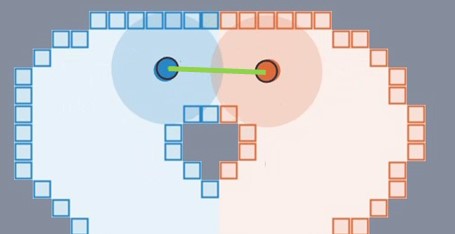}}
    \vspace{-5pt}
    \caption{Depiction of APF forces for each particle given different $\varphi_i$ over time. The color of the anchor line indicates communication (green) or no communication (red).}
    \label{fig:exampleAPF}
    \vspace{-5pt}
\end{figure}
In this way, \eqref{eq:cooperativeBehavior} controls all of robot $i$'s particles to all beliefs in $\mathcal{C}_i$ when $t_r>\tau$. This is formalized in the following lemma:
\begin{lemma}
If $F^2_{ij,b} = 0$, as $t\rightarrow\infty$ all particles in the set $\{p_{ij,b} \ | \ p_{ij,b}\in \mathcal{P}\}$ will converge to $c_k, \ \forall k\in\mathcal{A}$.
\end{lemma}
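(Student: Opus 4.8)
The plan is to model the particle update as the continuous--time flow $\dot p_{ij,b}=F^{total}_{ij,b}$ (a small--step discrete argument works identically via Lyapunov descent) and to analyse it in the regime the lemma concerns: exploration has completed so $F^1_{ij,b}=0$, the particle lies in an obstacle-- and task--free neighbourhood so $F^3_{ij,b}=F^4_{ij,b}=0$, and communication has been lost long enough that $t_r\ge\tau$, hence $\varphi_i=h(t_r,\tau)$ is bounded below by a positive constant. Under these conditions $F^{total}_{ij,b}$ reduces to the rendezvous force \eqref{eq:cooperativeBehavior} alone, $\dot p_{ij,b}=\beta_2\varphi_i\sum_{k\in\mathcal{A}}(c_k-p_{ij,b})/\|c_k-p_{ij,b}\|^3$, which is the setting singled out by the hypothesis that only this term is active.

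The next step is to recognise this as gradient ascent on $\Phi(p)=\sum_{k\in\mathcal{A}}\|c_k-p\|^{-1}$, using $\nabla_p\|c_k-p\|^{-1}=(c_k-p)/\|c_k-p\|^3$; thus $\dot p_{ij,b}=\beta_2\varphi_i\nabla\Phi(p_{ij,b})$ and $\dot\Phi=\beta_2\varphi_i\|\nabla\Phi\|^2\ge 0$ along trajectories, with equality only at critical points of $\Phi$. Holding the anchors $c_k$ fixed for the moment, $\Phi$ is a Morse function on $\R^2\setminus\{c_k\}$ whose only local maxima are the poles $c_k$ (where $\Phi\to+\infty$, and which the flow reaches in finite time since $\|c_k-p\|^3$ decreases at unit rate near $c_k$) and whose remaining critical points are saddles with one--dimensional stable manifolds. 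By LaSalle's invariance principle the trajectory approaches the critical set of $\Phi$; the saddle stable manifolds form a measure--zero set — and are in any case destroyed by the transient $F^1,F^3,F^4$ terms before they switch off — so every admissible particle converges to one of the common--belief points $c_k$. Finally, each $c_k$ is itself a particle $p_{ik,1}$ governed by the same law and attracted to the remaining anchors, so the configuration $\mathcal{C}_i$ itself collapses; running these two arguments jointly shows all anchors and all particles converge to a single point, which is exactly the claimed agreement ``$\to c_k$ for all $k$'' (i.e. rendezvous/consensus).

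The main obstacle is making the ``fixed anchors'' step honest: since the $c_k$ move, one should work in the relative coordinates $p_{ij,b}-c_k$, or better, use the ensemble spread $V=\sum_{p,q}\|p-q\|^2$ over all particles of $\mathcal{P}$ together with the anchors and show $V$ is a common Lyapunov function for the coupled system, ruling out joint drift or limit cycles. This is also what closes the gap between ``converges to some $c_k$'' and the literal ``converges to $c_k$ for all $k\in\mathcal{A}$'': the two agree only once the anchors are shown to coincide in the limit, so the per--particle convergence and the anchor collapse cannot be established in sequence. The singularity of $\Phi$ at the $c_k$ (so it is not a classical bounded Lyapunov function) and the exclusion of the measure--zero saddle basins are the remaining wrinkles, both standard for artificial--potential--field arguments.
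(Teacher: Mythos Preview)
Your argument is sound and considerably more careful than the paper's own proof, which is essentially a two-sentence assertion: it notes that $\lim_{t\to\infty}(F^1_{ij,b}+F^4_{ij,b})=0$ once coverage terminates, observes that $\varphi_i>0$ since eventually $t_r>\tau$, and then simply declares convergence to $c_k$ without any dynamical analysis.  The anchor--collapse issue you flag is handled by the paper in a single remark \emph{after} the proof, again without justification.  So your gradient--ascent/LaSalle machinery, the saddle discussion, and the joint Lyapunov function for the coupled particle--anchor system all supply rigor the paper omits; what you gain is an actual convergence proof, what the paper gains is brevity.

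One caveat on the hypothesis: you paraphrase it as ``only $F^2$ is active,'' but the lemma as stated reads ``If $F^2_{ij,b}=0$,'' which taken literally is the opposite. This is almost certainly a typo in the paper (plausibly $F^3_{ij,b}=0$ was meant, since the obstacle term is the only one the paper's proof never addresses), and the paper's own proof proceeds exactly under your reading, so you are matching the authors' intent. Just be aware that your opening sentence does not match the literal statement you were given.
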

\begin{proof}
Given the $\lim_{t\rightarrow\infty}F^1_{ij,b} + F^4_{ij,b} = 0$ once all area has been covered, the only force acting on each particle will be $F^2_{ij,b}$. Also, $\varphi_i>0$ since $t_r>\tau$ when $t\rightarrow\infty$, all particles in the set $\{p_{ij,b} \ | \ p_{ij,b}\in \mathcal{P}_i\}$ will converge to $c_k\in\mathcal{C}_i$.
\end{proof}

Considering that the $c_k\in \mathcal{P}_i$ is controlled via \eqref{eq:cooperativeBehavior}, all common belief states in $\mathcal{C}_i$ converge and so, all particles in $\mathcal{P}_i$ converge to the same rendezvous location. Thus, it is imperative to ensure that while the robots are not communicating they can reach consensus such that
 \begin{equation}
     \mathcal{C}_i\equiv \mathcal{C}_j, \ \ \forall i,j\in \mathcal{A}.
 \end{equation}
We coordinate this consensus using dynamic epistemic logic. 

Referring to the previously established semantics for DEL in Sec.~\ref{sec:preliminaries}, we introduce the set $\Psi$ consisting of a binary and tertiary function, \track ~and \anchor,  noting the argument for the $i^{th}$ robot is denoted as $A_i$ for readability in the epistemic model. The function $\track(A_i,p_{ii,b})$ is read as ``robot $i$ is tracking empathy particle $b$" and $\anchor(A_i,p_{ii,b},p_{ij,b})$ is read as ``robot $i$ is using belief particle $p_{ij,b}$ and empathy particle $p_{ii,b}$ as its common belief." 

Since a robot's failures can affect a robot's capabilities in disparate ways and at any time, we formulate a strategy for rendezvous that accounts for all possible combinations of failures. Given that robots are connected given a communication graph $G$, we also assume particles can observe each other similarly (e.g., based on range, line-of-sight), denoted logically as $p_{ij,b}\sphericalangle p_{ik,b}$. If two common belief particles have observed each other, we know that one of four events have occurred for two robots: i) $A_i$ has observed that $A_j$ is not tracking the common belief particle, ii) $A_j$ has observed that $A_i$ is not tracking the common belief particle, iii) neither agents observe either common belief particle, or iv) both agents communicate. In events (i)-(iii), neither robot knows the true system state since the robots did not communicate. 

Given that our current common belief is the $b^{\text{th}}$ particle and all particles are within observation range, we define the consensus-based policy sequence formally as:
\begin{align*}
    a_{0} &= p_{ij,b}\sphericalangle p_{ji,b} \ \forall i,j\in \mathcal{A}\\
    a_{1} &= \anchor(A_j,p_{ij,b+1} \ \forall j\in \mathcal{A})\\
    a_s &= a_0\otimes a_1 
\end{align*}
such that the frame-policy update is:
\begin{equation}
    f_{b+1} = f_{b}\otimes a_s
    \label{eq:strategy}
\end{equation}
where $b \in \mathcal{B} \setminus \{N_b\}$ and $\otimes$ is a modal product. This strategy is used until all robots communicate at which time the particles are reset to the robots' poses and dynamics are updated.

\textbf{\textit{Example:}} Consider $f_0,f'_0,f_1$ in the Kripke model shown in Fig.~\ref{fig:example1} where the true world is denoted by the black vertex, edges represent accessibility relation ($R$), and new propositions ($V$) are denoted in each worlds' sets. From the perspective of robot $A_1$, each robot initially tracks particles $p_{11,1}$, $p_{12,1}$ and $p_{13,1}$. All robots propagates four possible belief states for each teammate and four empathy states of itself. We denote robot $A_1$'s knowledge and belief of his own state as $K_1$ \track$(A_1,p_{11,1})$. Similarly, robot $A_1$'s knowledge and belief about his teammate's state is represented as $B_1 \track(A_j,p_{1j,1}), j\in\{2,3\}$ and empathy from robot $A_1$'s perspective is shown as $B_1B_j\track(A_1,p_{j1,1}), j\in\{2,3\}$ and can be read as ``robot $A_1$ believes that robot $j$ believes that robot $A_1$ is tracking particle 1". We denote the initial common belief propositions for all three agents as $\anchor(A_i,p_{i1,1},p_{i2,1},p_{i3,1}), \ i\in{1,2,3}$. 

\begin{figure}[t]
    \hspace{-3pt}\includegraphics[width = 0.46\textwidth]{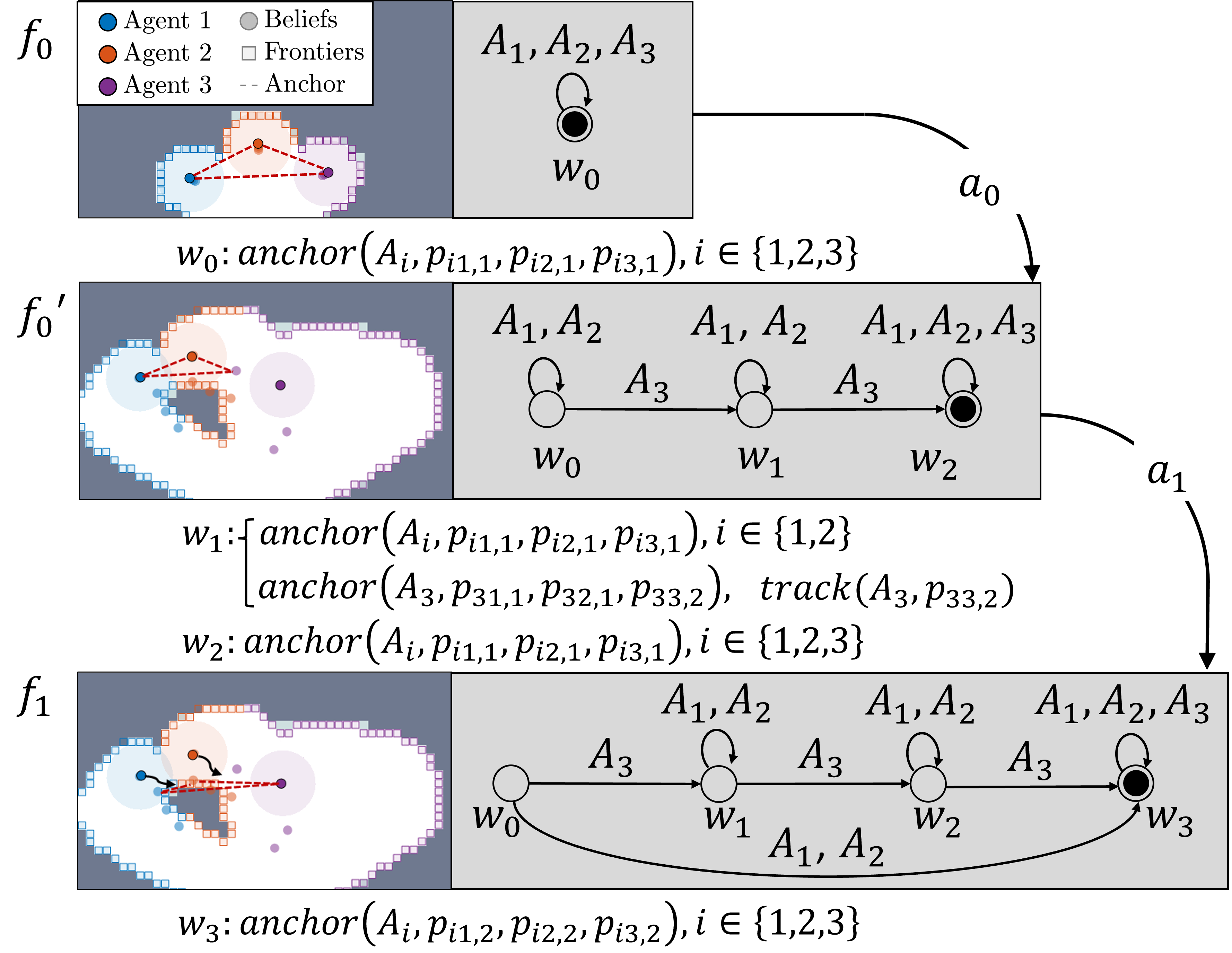}
    \vspace{-5pt}
    \caption{Example scenario where $A_3$ experiences a failure. The Kripke models are shown in the gray boxes with frame transitions denoted as $a_n$.}
    \label{fig:example1}
    \vspace{-15pt}
\end{figure}

After disconnection (as shown in $f'_0$ in Fig.~\ref{fig:example1}), the robots can no longer assume to have knowledge about the true state of the system. In this scenario, $A_3$ experiences a failure such that $\track(A_3,p_{33,2})$ and updates its common belief ($w_1$). However, $A_3$ subsequently reasons that $A_1$ and $A_2$ hold a false belief that $A_3$ is still tracking the first particle. Thus, $A_3$ reverts its common belief in $w_2$ to mirror the initial proposition such that $\anchor(A_i,p_{i1,1},p_{i2,1},p_{i3,1})$.

When $A_1\sphericalangle p_{13,1}$ in $f'_0$, $A_1$ and $A_2$ are are connected and so $A_1$ relays that $A_1\sphericalangle p_{13,1}\land A_1\sphericalangle\neg A_3$, reasoning that $A_3$ does not know that both agents are still tracking their respective first particle. $A_3$ also knows that all three first particles are within communication range, but cannot observe any additional information. Thus, all three agents update their common belief such that $\anchor(A_i,p_{i1,2},p_{i2,2},p_{i3,2})$ and all particles begin converging to the updated common belief set.

\subsection{Obstacle Avoidance \& Task Completion} \label{sec:avoidance_and_taskCompletion}
Finally, we consider the last two forces in \eqref{eq:taskTotalForce}. To avoid obstacles, force $F^3_{ij,b}$ is formulated as: 
\begin{equation}
\vspace{-5pt}
    F^3_{ij,b} = \dfrac{1}{|\mathcal{O}|}\sum_{o\in \mathcal{O}}\dfrac{s_o-p_{ij,b}}{||s_o-p_{ij,b}||^3}\label{eq:avoidobstacles}
\end{equation}
where $s_o \in \mathcal{C}$ is the coordinate for an obstacle $o\in\mathcal{O}$ in the environment. We note that $\mathcal{O}$ is a set of the commonly known obstacles for all agents. For example, if a robot individually encounters an obstacle, but has not communicated its location to all teammates, the particles' motions are not affected by this new obstacle which is unknown to other robots. 

For attraction to tasks, the force $F^4_{ij,b}$ is only active when all agents are connected to propagate particles towards any commonly known tasks. These tasks are centrally auctioned and each robot submits a bid based on estimated traversal time to the task. Assigned tasks placed in a queue set $\mathcal{Q}_i$, and distributed to robot $i$'s particle task queues, $\mathcal{Q}_{ij,b}$. Attraction to the first task, $\mathcal{Q}_{ij,b}[1]$, in a particle's queue is formulated as:
\begin{equation}
    F^4_{ij,b} = \mathcal{Q}_{ij,b}[1] - p_{ij,b}
\end{equation}
where the coefficients for $F^1_{ij,b}$ and $F^4_{ij,b}$ depend on the particle's queue $\mathcal{Q}_i$, such that:
\begin{equation}
    \left\{ \begin{array}{rl}
    \beta_1 = 0 \text{ and } \beta_4>0, & \text{if } \mathcal{Q}_i \neq \emptyset, \\[2pt]
    \beta_1 > 0 \text{ and } \beta_4 = 0, & \text{otherwise}.
    \end{array} \right.
\end{equation}
Once the task queue is empty, the particle force for tasks is set to zero and coverage resumes.

\subsection{Particle Tracking}
After particle propagation has occurred, a robot must predict and track its empathy particle considering the possibility of a new obstacle or discovering a task while disconnected. Though any constrained tracking algorithm can be employed, we use a nonlinear receding horizon controller (RHC) to minimize the distance to the particle while maintaining a radius from any obstacle \cite{mayne1988receding}. Additionally, if the team is disconnected and a robot discovers a task within its observing radius $r_o$, the tasks are re-indexed by monotonically increasing cost and placed in the set $\mathcal{Q}_i$. The controller's goal is updated to the first queued task location. Once the robot has traveled within the completion radius, $r_t$, the robot continues tracking its respective particle and the task is removed from the robot's queue. An example for both obstacle avoidance and task discovery are shown in Fig.~\ref{fig:deviations}. 
\begin{figure}[ht!]
\vspace{-2pt}
 \subfigure{\includegraphics[width = 0.07\textwidth]{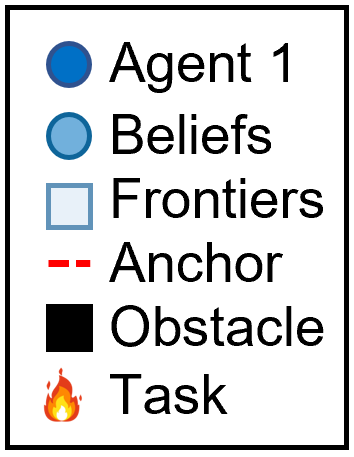}}
    \renewcommand{\thesubfigure}{(a)}
        \hspace{5pt}
    \subfigure[]{
    \includegraphics[width = 0.17\textwidth]{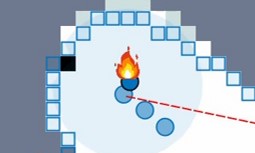}}
    \renewcommand{\thesubfigure}{(b)}
    \hspace{8pt}
    \subfigure[]
    {\includegraphics[width = 0.17\textwidth]{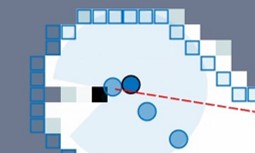}}
\vspace{-5pt}
\caption{Examples of necessary deviation. In (a) the robot spots a task and updates its RHC goal. In (b) the robot must avoid a discovered obstacle while minimizing distance to the particle.}
\label{fig:deviations}
\vspace{-10pt}
\end{figure}

\section{Simulations} \label{sec:sims}
In this section, we provide results and comparisons from MATLAB simulations with our approach implemented on a two robot team. Simulations were performed on 15 random $50\text{m}\times 50\text{m}$ environments with 5-15 initially unknown obstacles and a maximum of seven tasks. The robots start by assuming that the environment has no obstacles and do not know the location of the tasks. We compare the results between: 1) no failures, 2) one failure by one robot, and 3) one failure by each robot at random times.
\begin{figure}[bh!]
    \centering
    \includegraphics[width = 0.48\textwidth]{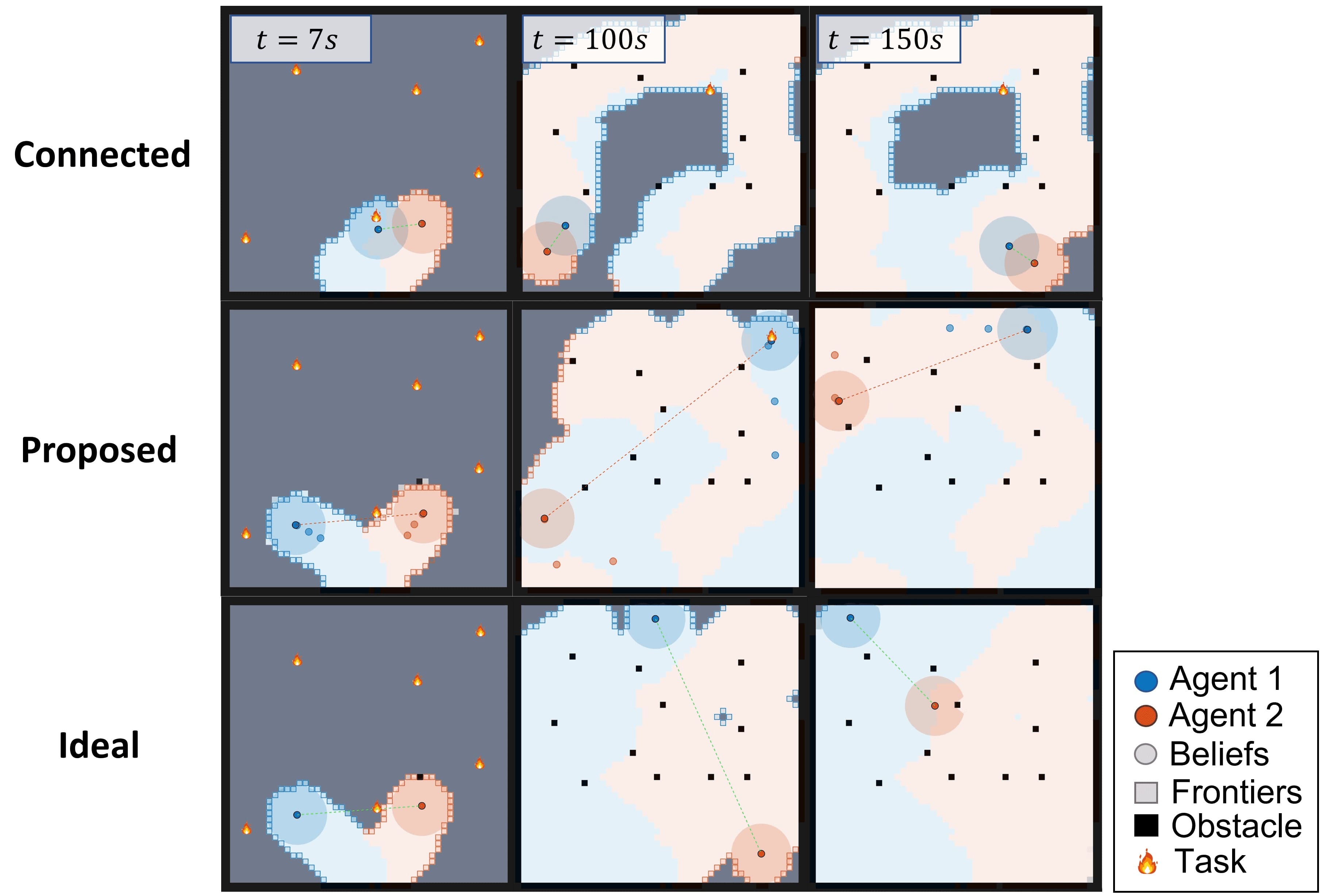}
    \vspace{-5pt}
    \caption{Snapshots of example simulations. The robots experience a failure each at random times with 11 unknown obstacles and 7 tasks.}
    \label{fig:picComparison}
    \vspace{-0pt}
\end{figure}

\begin{figure}[bh!]
    \vspace{-5pt}
    \includegraphics[width = 0.42\textwidth]{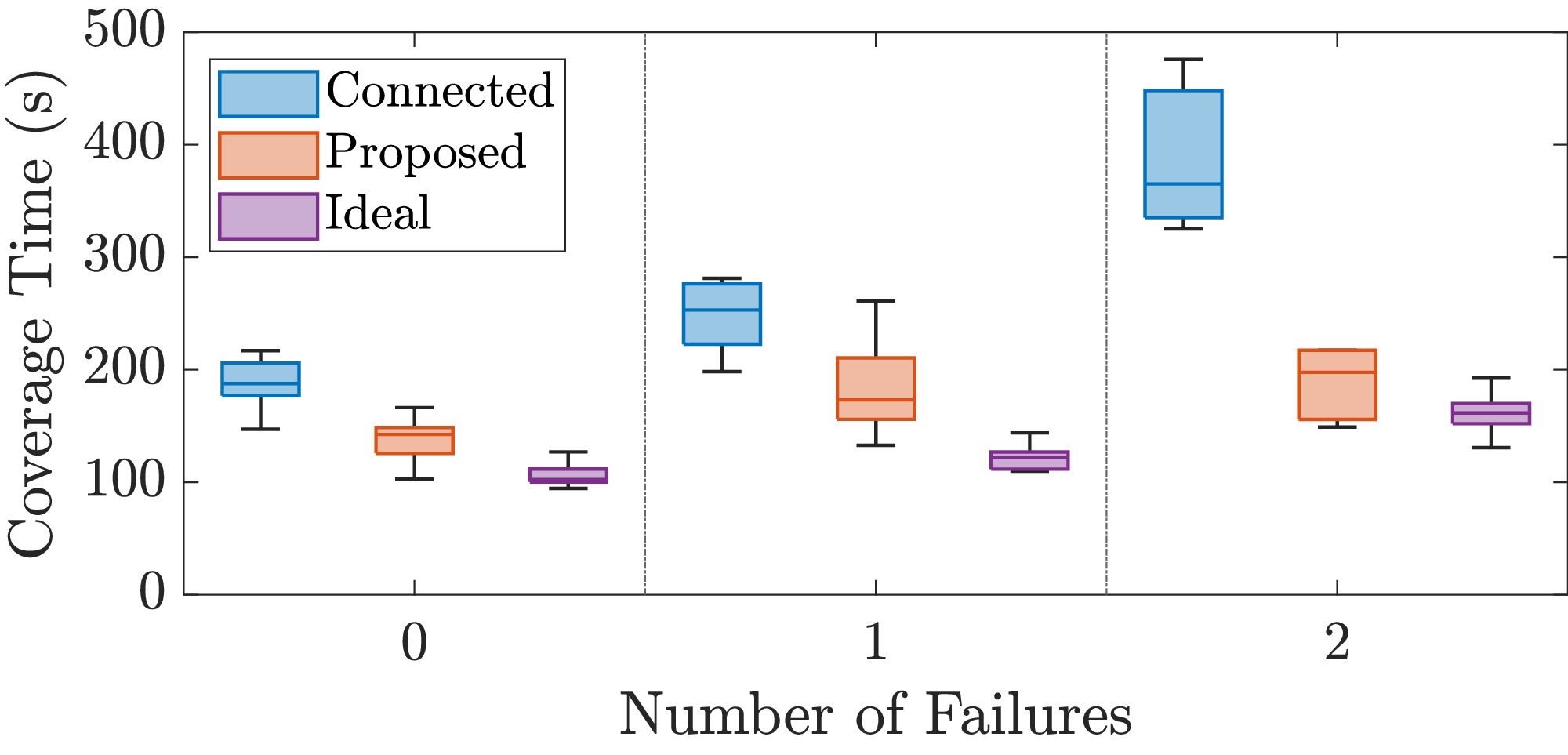}
    \vspace{-5pt}
    \caption{Comparison between methods given 0, 1, and 2 failures.}
    \label{fig:comparison}
    \vspace{-5pt}
\end{figure}

\begin{figure*}[t]
\vspace{-2pt}
    \subfigure[]{
    \includegraphics[width = 0.18\textwidth,trim = {0cm 0cm 0cm 0},clip]{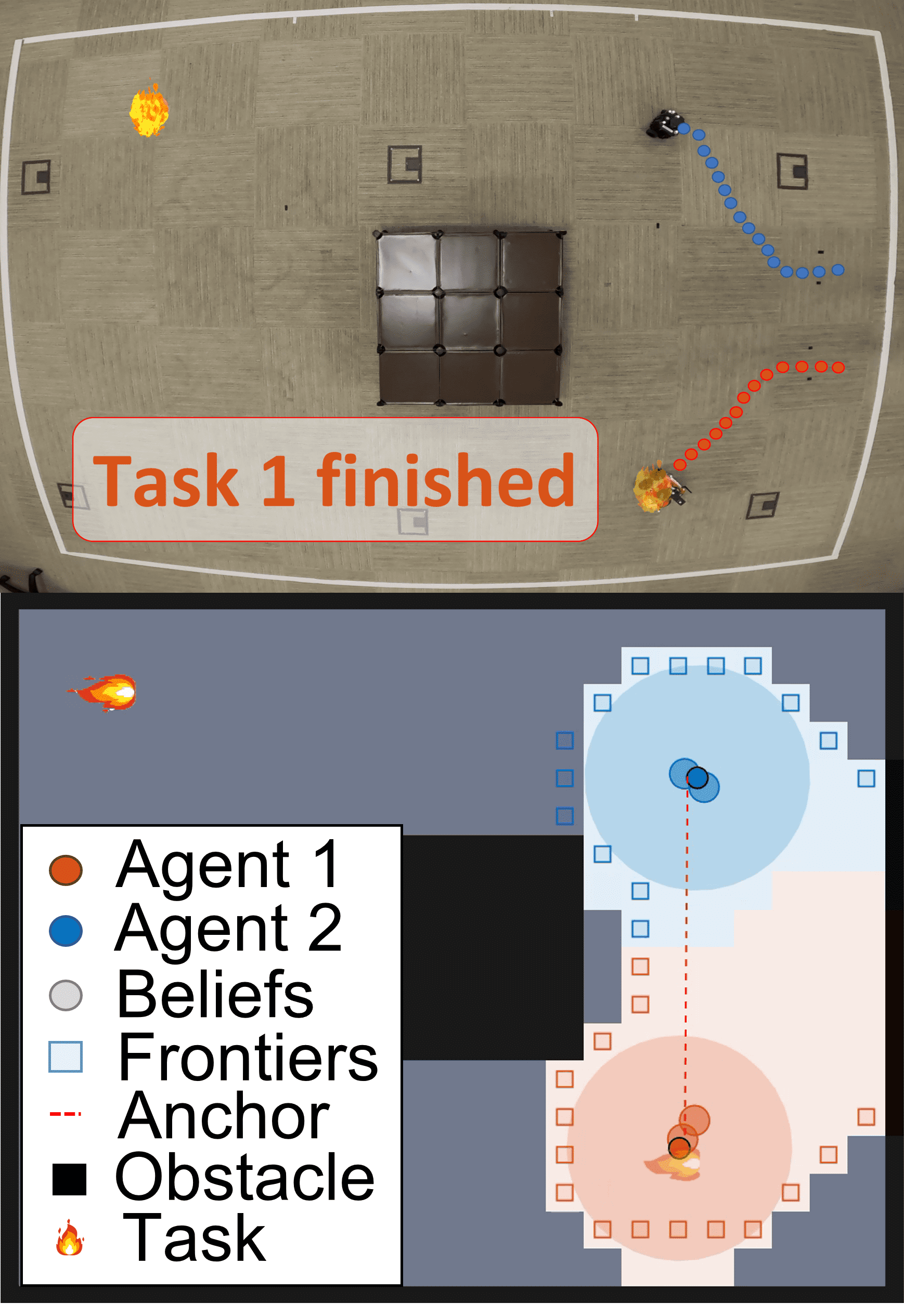}
    \label{fig:exa}
    }%
    \subfigure[]{
    \includegraphics[width = 0.18\textwidth,trim = {0cm 0cm 0cm 0},clip]{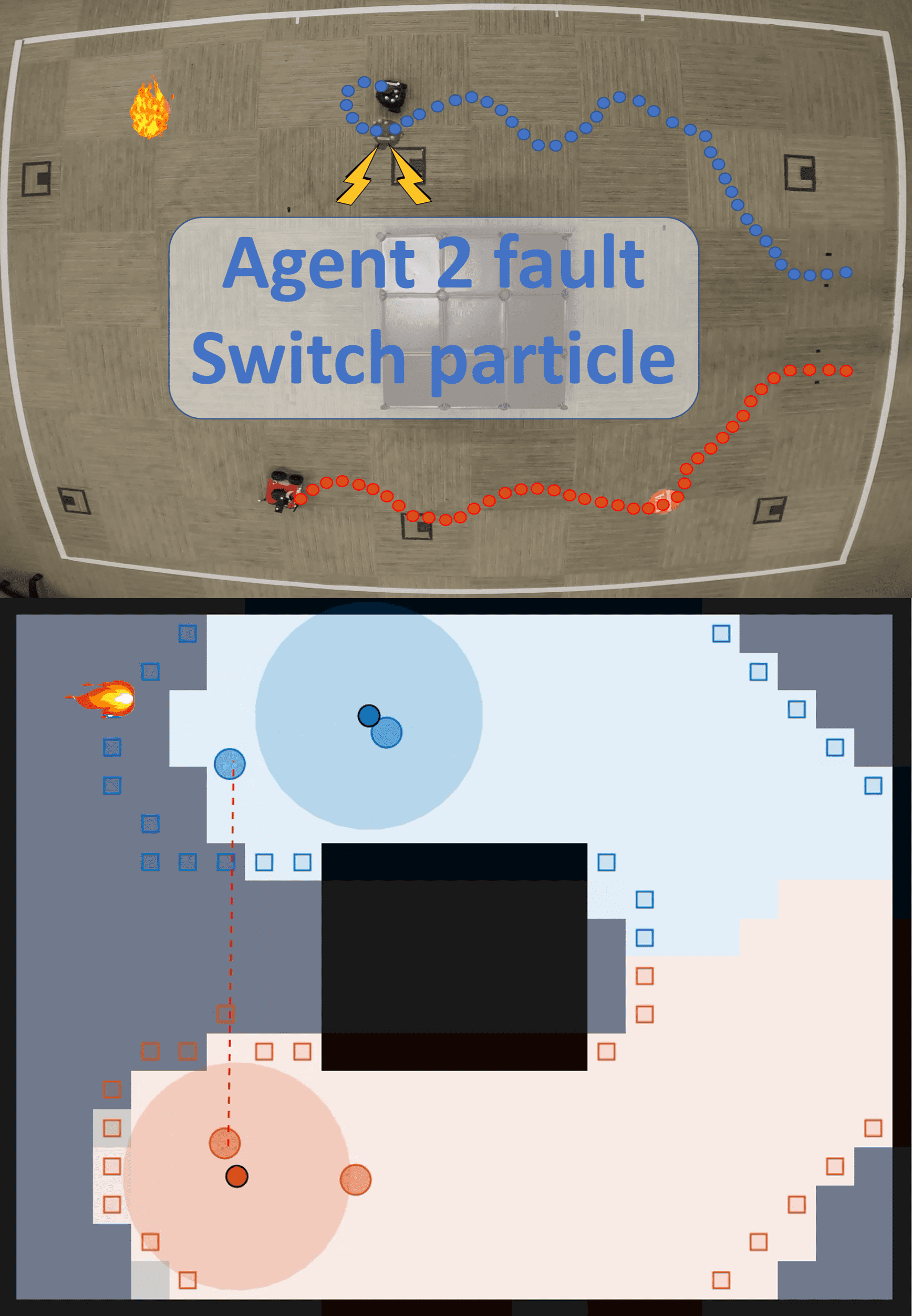}
    \label{fig:exb}
    }%
    \subfigure[]{
    \includegraphics[width = 0.18\textwidth,trim = {0cm 0cm 0cm 0},clip]{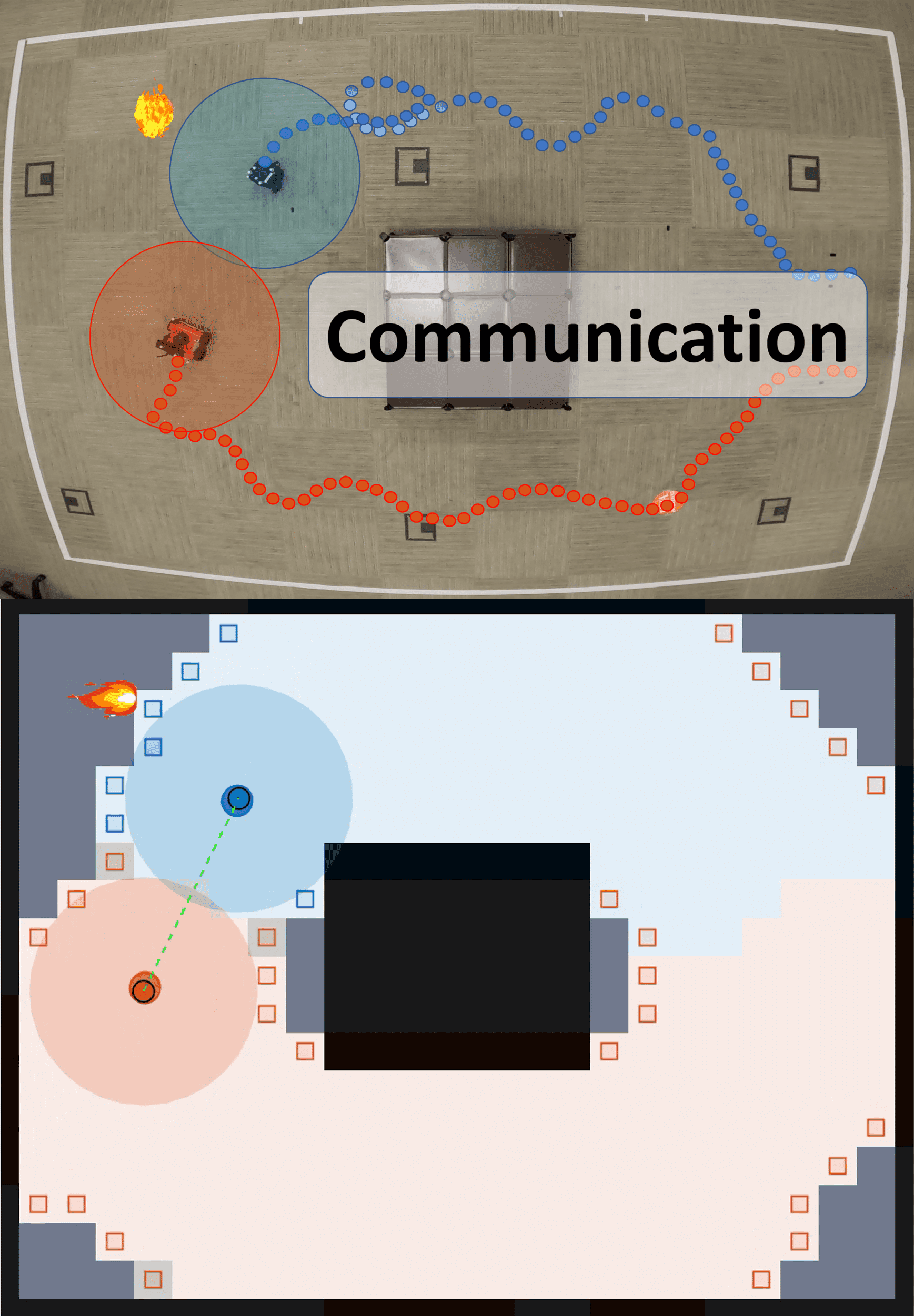}
    \label{fig:exc}
    }
    \subfigure[]{
    \includegraphics[width = 0.18\textwidth,trim = {0cm 0cm 0cm 0},clip]{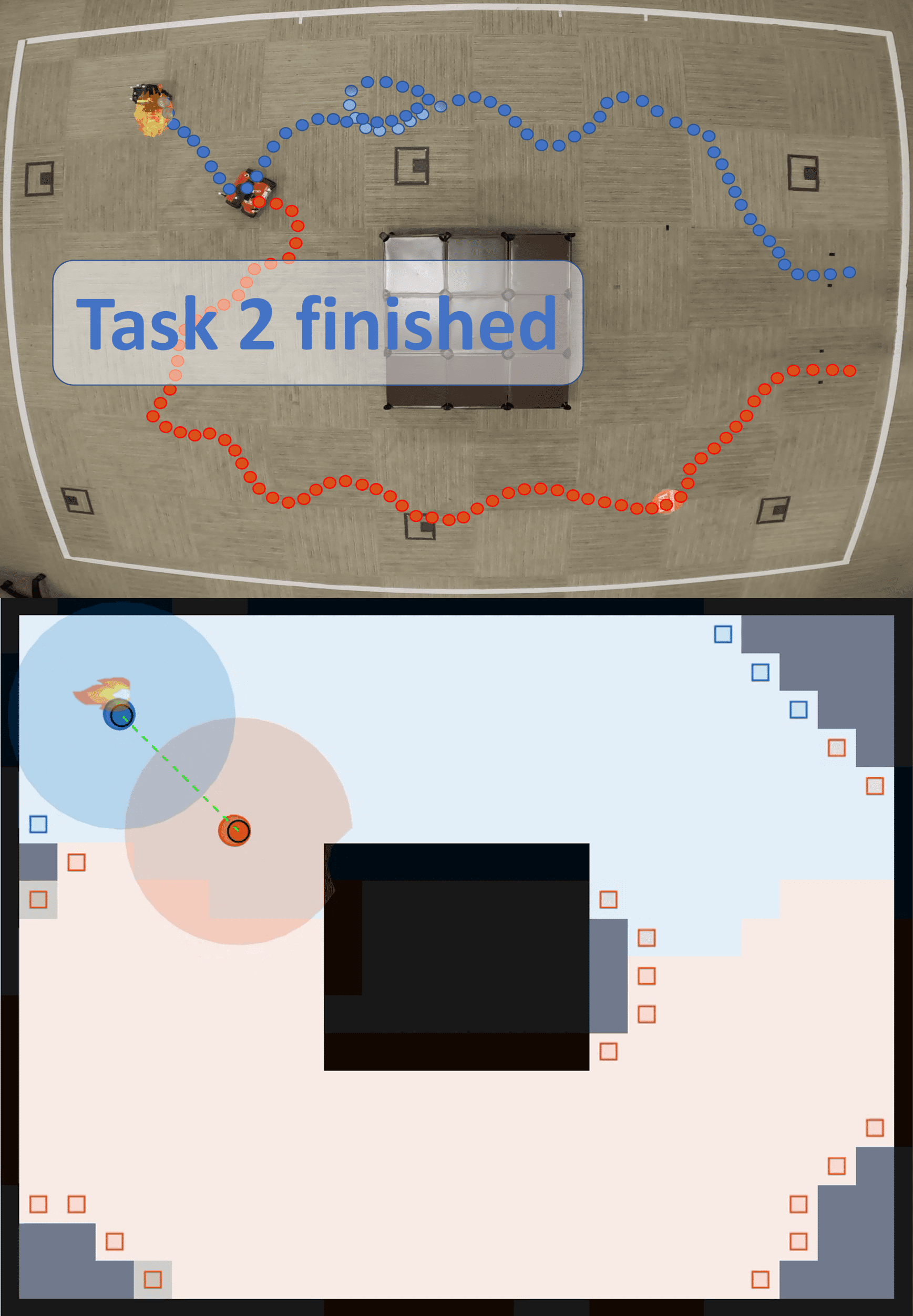}
    \label{fig:exd}
    }
    \subfigure[]{
    \includegraphics[width = 0.18\textwidth,trim = {0cm 0cm 0cm 0},clip]{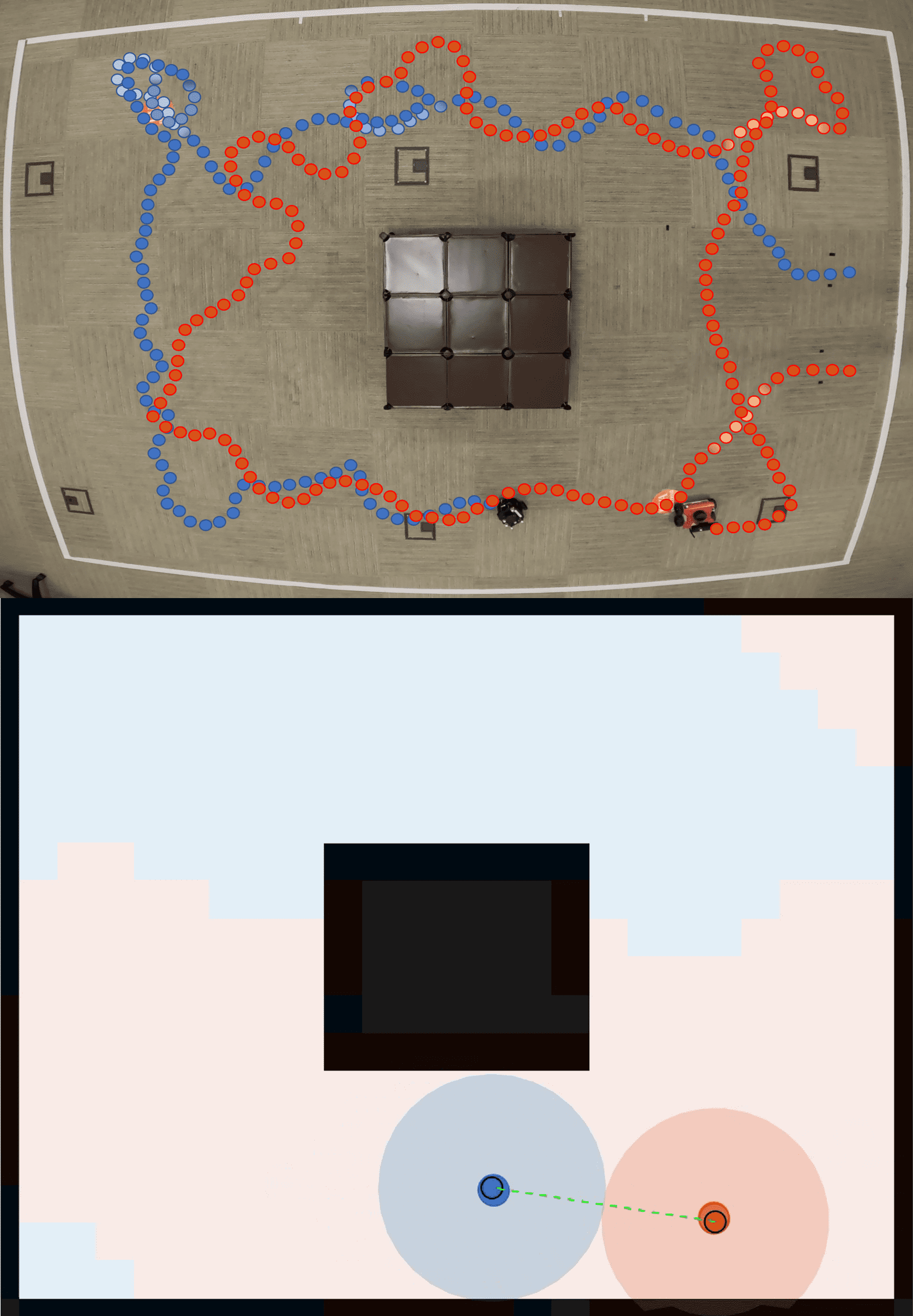}
    \label{fig:exe}
    }
\vspace{-5pt}
\caption{Snapshots and results of an experimental case study.}
\label{fig:expFig}
\vspace{-8pt}
\end{figure*}

Each robot propagates three particles traveling at 2m/s, 1m/s and 0.5m/s. A failure can cause the robot to track either the second or third particle. The particles propagate according to these failure velocities and the maximum communication range is 10m from the center of the robot. 

The proposed approach is compared against two other methods. The first method applies a constant connectivity constraint, not allowing agents to travel outside of a 10m communication range. The second method assumes ideal conditions, where robots can communicate across the entire environment. Both methods use an artificial potential field technique for controlling the robots towards uncovered regions and away from obstacles. In all methods, the initial maximum velocity is 2m/s, the simulated LiDAR range is 5m, and the robots' motion is modeled using unicycle dynamics. Fig.~\ref{fig:picComparison} shows an example at various time steps for the three methods, displaying the coverage disparity of the connected method versus our proposed method and showing the similarities between our method and the ideal method.

As shown in Fig.~\ref{fig:comparison}, the proposed method outperforms the fully connected method in all scenarios. Additionally, the median coverage time for the proposed method is similar to the ideal method, even with the communication limitation. 
\section{Experiments}
The proposed approach was also validated through laboratory experiments with a two-robot team. The team consists of a Husarion ROSbot 2.0 UGV and a Turtlebot3 Burger UGV using a Vicon motion capture system. 
The two-robot experiments effectively demonstrate all parts of the proposed approach, including intentional disconnections, searching, and rendezvousing behaviors. In all experiments, the UGVs start within communication range and are tasked to cover the environment and complete any discovered tasks.

Experiments were performed in a $4$m$\times 5.5$m space containing convex obstacles considering, as a proof of concept, a sensing and communication range for each robot of $1$m. Displayed in Fig.~\ref{fig:expFig} are the results from the two-robot experiment in which the vehicles are required to search for and complete two tasks unknown a priori. The columns of Fig.~\ref{fig:expFig} correspond to different instances within the experiment, and each row from top to bottom shows snapshots of the robots at different times throughout the experiment and the current map of the environment covered by the team. 
In Fig.~\ref{fig:expFig}(a), the UGVs start to cover the map in search of tasks and disconnect. Robot 1 (ROSbot) finds a task and completes it. In Fig.~\ref{fig:expFig}(b), robot 2 (Turtlebot) experiences a fault and begins following the second empathy particle. In Fig.~\ref{fig:expFig}(c-d) the robots connect, share fault information, and bid on the discovered task. Robot 1 receives a larger share of the frontier as a result of Robot 2's failure and Robot 1 is assigned the task based on proximity. Finally, once all tasks are completed and no frontiers remain, the agents rendezvous and the final results are shown in Fig.~\ref{fig:expFig}(e). Final coverage over time for each robot is shown in Fig.~\ref{fig:expResults}. More lab experiments with two-robots are included in the supplementary material.

\begin{figure}[t]
    \centering
    \includegraphics[width = 0.45\textwidth]{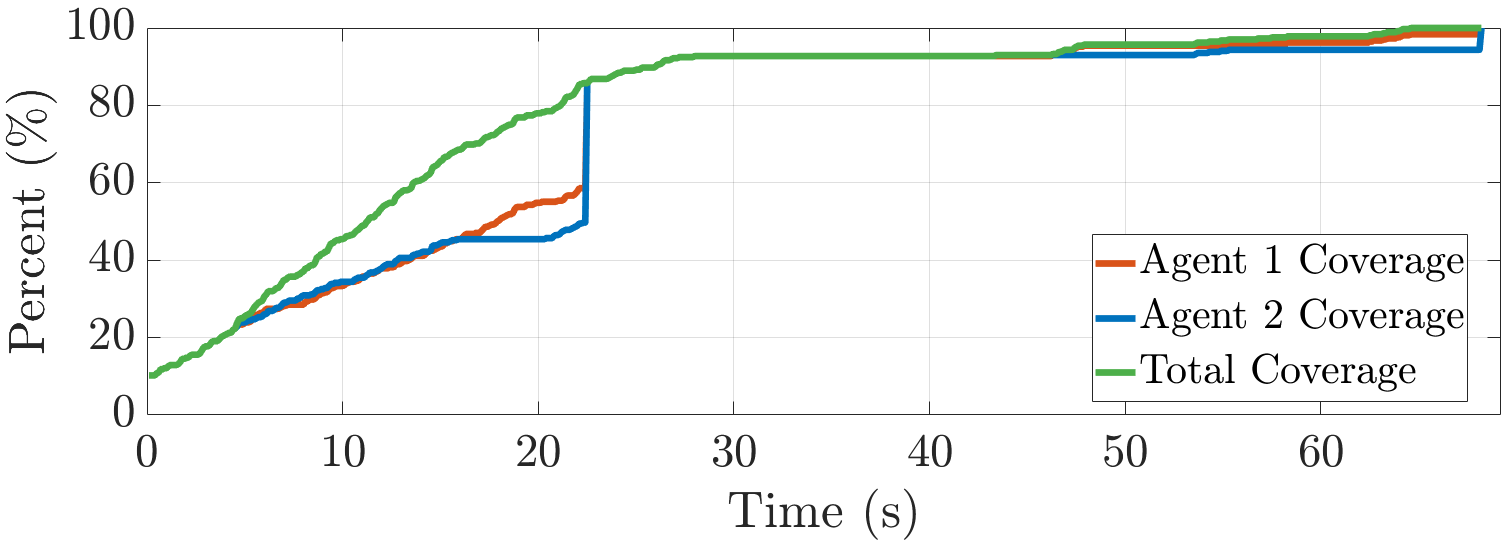}
    \vspace{-10pt}
    \caption{Graphical coverage over time for the experimental case study.}
    \label{fig:expResults}
    \vspace{-18pt}
\end{figure}

\label{sec:exp}

\section{Conclusion \& Future Work} \label{sec:concs}
In this work, we have presented a novel framework for multi-robot systems to use epistemic planning to propagate and behave according to a set of beliefs. The proposed method promotes disconnection using frontier-based exploration and includes a dynamic rendezvous approach to reconnect and share data among the multi-robot system. The extensive simulations and experiment results show the validity, applicability, generality of the proposed method. 
Using this framework, we also demonstrate improved  task completion and coverage time of partially known environments with respect to standard coverage methods. From here, future theoretical work includes addressing the challenges of dynamic task lengths and optimal strategies for intentional information sharing in complex or unknown environments. Further modeling of epistemic planning using epistemic MDPs to reach a more informed consensus for faster information dissemination is also on our agenda. Additionally, we plan to apply this framework in outdoor experiments using robots equipped with short-range communication devices.
\section{Acknowledgement}
This work is based on research sponsored by Northrop Grumman through the University Basic Research Program and DARPA under Contract No. FA8750-18-C-0090.

\newpage
\bibliographystyle{IEEEtran}
\bibliography{newRef.bib}

\end{document}